\renewcommand*{\eqref}[1]{%
  \hyperref[{#1}]{\textup{\tagform@{\ref*{#1}}}}%
}
\newcommand{\titleS}[2]{\title{{#2}}}
\newtheorem{definition}{Definition}
\newtheorem{proposition}{Proposition}
\newcommand{\eqnrel}[1]{\hspace{.2cm}{#1}\hspace{.2cm}}
\newcommand{\tuple}[1]{\langle{#1}\rangle}
\DeclareMathOperator{\rank}{rank}
\author{Jean-Marc Andreoli\footnote{\tt jean-marc.andreoli@naverlabs.com}\\NAVER LABS Europe, Grenoble, France\\
{\small\tt http://www.europe.naverlabs.com}}
\date{April 2019, last modified March 2020}
\begin{document}
\maketitle
\begin{abstract}
Deep neural networks are composed of layers of parametrised linear operations intertwined with non linear activations. In basic models, such as the multi-layer perceptron, a linear layer operates on a simple input vector embedding of the instance being processed, and produces an output vector embedding by straight multiplication by a matrix parameter. In more complex models, the input and output are structured and their embeddings are higher order tensors. The parameter of each linear operation must then be controlled so as not to explode with the complexity of the structures involved. This is essentially the role of convolution models, which exist in many flavours dependent on the type of structure they deal with (grids, networks, time series etc.). We present here a unified framework which aims at capturing the essence of these diverse models, allowing a systematic analysis of their properties and their mutual enrichment. We also show that attention models naturally fit in the same framework: attention is convolution in which the structure itself is adaptive, and learnt, instead of being given a priori.
\end{abstract}
\section{A generic framework for convolution on arbitrary structures}
Convolution is a powerful operator, which is widely used in deep neural networks in many different flavours:~\cite{lecun_convolutional_1998,lecun_convolutional_2010,ian_goodfellow_deep_2016,dumoulin_guide_2016,kipf_semi-supervised_2016,shi_convolutional_2015,mallat_understanding_2016}. It allows to express in a compact form operations on a structured bundle of similarly shaped data instances (embeddings of nodes in a network, of instants in a time series, of pixels in an image, etc.) taking into account some known structural dependencies between them (edges between nodes, or temporal relations between instants, or positional relations between pixels). In spite of their apparent diversity, these structures can be formalised as {\em families} of weighted graphs, where each graph in a family captures one aspect of the structure. We develop a generic model of convolution over such structures.
\subsection{Some useful properties of tensors}
\label{sec:tensors}
A tensor is characterised by its shape $S{=}\tuple{S_1\cdots S_{|S|}}$, which is a sequence of integers, its index set which is the cartesian product $\bar{S}\triangleq\prod_{i=1:|S|}\{1\cdots S_i\}$ of cardinality $|\bar{S}|{=}\prod_{i=1:|S|}S_i$, and its value which is a mapping from its index set into the set of scalars. By construction, the space of tensors of a given shape $S$ is of dimension $|\bar{S}|$. If $S$ and $T$ are shapes, we let $ST$ denote their concatenation. The following common operations on tensors are recalled here (the notation $a{:}S$ stands for ``tensor $a$ of shape $S$''):
\[
{\setlength{\extrarowheight}{2pt}
\begin{array}{|l|l|l|l|}
\hline
& \textrm{operands} & \textrm{result} & \textrm{definition} \\
\hline
\textrm{\em slicing} &  a:ST\hspace{.3cm}s\in\bar{S} & a_s:T &
(a_s)_t \triangleq a_{st}\\
\hline
\textrm{\em flattening} &
a:ST\hspace{.3cm}\omega:\bar{S}\mapsto\{1\cdots K\}\textrm{ \small bijective, hence }K=|\bar{S}|& 
a^{[\omega]}:\tuple{K}T &
a^{[\omega]}_{\tuple{k}t} \triangleq a_{(\omega^{-1}k)t}\\
\hline
\textrm{\em outer product} &  a:S\hspace{.3cm}b:T & a\otimes b:ST &
(a\otimes b)_{st} \triangleq a_sb_t\\
\hline
\end{array}}
\]
In the case of flattening, when $T$ is of length $1$ (resp. $0$), then $\boldsymbol{a}^{[\omega]}$ is a matrix (resp. a vector) and flattening is then called {\em matricisation} (resp. {\em vectorisation})~\cite{rabanser_introduction_2017}. A common choice for $\omega$ is the {\em canonical bijection}~\cite{rabanser_introduction_2017} $\omega_S$ defined for each $s{\in}\bar{S}$ by
\begin{align*}
\omega_S(s) & \eqnrel{\triangleq} 1+{\sum}_{i=1:|S|}(s_i-1){\prod}_{j=i+1:|S|}S_j
\end{align*}
In this paper, we also make use of a less common operation on tensors, called here the {\em mixed} product: if $K$ is an integer and $\boldsymbol{a},\boldsymbol{b}$ are tensors of shape $\tuple{K}S$ and $\tuple{K}T$, respectively, their mixed product denoted $\boldsymbol{a}\circ\boldsymbol{b}$ is a tensor of shape $ST$ defined by
\begin{align}
\label{eqn:mixed-product}
\boldsymbol{a}\circ\boldsymbol{b} & \eqnrel{\triangleq} {\sum}_k\boldsymbol{a}_k\otimes\boldsymbol{b}_k
\end{align}
Operator $\circ$ combines features of both the inner and outer products. It can be seen as a partially factorised form for tensors. Full tensor factorisation, according to the CanDecomp scheme~\cite{rabanser_introduction_2017}, corresponds to the case where each $\boldsymbol{a}_k$ and each $\boldsymbol{b}_k$ is itself of rank $1$, i.e. decomposed into an outer product of $|S|$ and $|T|$ {\em vectors}, respectively. The rank of the resulting factorised form is, in that case, bounded by $K$, a fact which is exploited by many low rank approximation schemes. But even in its partial form, the factorisation of Equation~\eqref{eqn:mixed-product} satisfies a rank constraint, which can also be used to model low-rank approximations of arbitrary tensors:
\[
\rank(\boldsymbol{a}\circ\boldsymbol{b})\leq{\sum}_k\rank(\boldsymbol{a}_k)\rank(\boldsymbol{b}_k)
\]
\begin{proposition}[Inversion]
\label{prop:inversion}
Let $S,T$ be arbitrary shapes and $K$ an integer. Let $\boldsymbol{a}$ be a tensor of shape $\tuple{K}S$ such that the family $(\boldsymbol{a}_k)_{k=1:K}$ be a basis of the space of tensors of shape $S$ (hence $K{=}|\bar{S}|$). Then for any tensor $\boldsymbol{\Phi}$  of shape $ST$ there exists a unique tensor $\boldsymbol{\Theta}$ of shape $\tuple{K}T$ such that $\boldsymbol{\Phi}=\boldsymbol{a}\circ\boldsymbol{\Theta}$.
\end{proposition}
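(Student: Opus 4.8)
The plan is to turn the tensor equation $\boldsymbol{\Phi}=\boldsymbol{a}\circ\boldsymbol{\Theta}$ into an ordinary square linear system by fixing the ``$T$-side'' index. Writing the mixed product componentwise via its definition, for every $s\in\bar{S}$ and $t\in\bar{T}$ one has $(\boldsymbol{a}\circ\boldsymbol{\Theta})_{st}=\sum_k(\boldsymbol{a}_k)_s(\boldsymbol{\Theta}_k)_t$. So for each fixed $t\in\bar{T}$, abbreviating the shape-$S$ tensor $\phi^{(t)}\triangleq(\boldsymbol{\Phi}_{st})_{s\in\bar{S}}$ and the scalar vector $\theta^{(t)}\triangleq((\boldsymbol{\Theta}_k)_t)_{k=1:K}$, the equation restricted to that $t$ reads $\phi^{(t)}=\sum_k\theta^{(t)}_k\,\boldsymbol{a}_k$, i.e. it asks precisely for the coordinates of $\phi^{(t)}$ in the family $(\boldsymbol{a}_k)_{k=1:K}$.

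First I would invoke the hypothesis that $(\boldsymbol{a}_k)_{k=1:K}$ is a basis of the space of shape-$S$ tensors (which indeed has dimension $|\bar{S}|=K$): every shape-$S$ tensor, in particular each $\phi^{(t)}$, has a unique expansion in that basis, which furnishes existence and uniqueness of $\theta^{(t)}$, for each $t$ independently. Then I would reassemble: setting $(\boldsymbol{\Theta}_k)_t\triangleq\theta^{(t)}_k$ for all $k$ and all $t\in\bar{T}$ produces a well-defined tensor $\boldsymbol{\Theta}$ of shape $\tuple{K}T$, and running the componentwise identity backwards shows $\boldsymbol{\Phi}=\boldsymbol{a}\circ\boldsymbol{\Theta}$; conversely any solution must satisfy the per-$t$ systems, hence must coincide with this $\boldsymbol{\Theta}$.

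An equivalent packaging that makes the ``square system'' explicit is via matricisation: let $A$ be the $K{\times}K$ matrix whose $k$-th row is the vectorisation $\boldsymbol{a}_k^{[\omega_S]}$; the basis hypothesis is exactly the invertibility of $A$, and flattening the $S$-indices of $\boldsymbol{\Phi}$ and of the $\boldsymbol{\Theta}_k$ turns $\boldsymbol{\Phi}=\boldsymbol{a}\circ\boldsymbol{\Theta}$ into $A^\top\Xi=\Psi$ with $\Xi,\Psi$ having columns indexed by $\bar{T}$, whence $\Xi=(A^\top)^{-1}\Psi$ is the unique solution. Either way there is no obstacle of substance: the only care needed is the index bookkeeping in the mixed product and the slicing convention, together with the observation that the constraint $K=|\bar{S}|$ is precisely what forces the linear map $\boldsymbol{\Theta}\mapsto\boldsymbol{a}\circ\boldsymbol{\Theta}$ to go between spaces of equal dimension $K|\bar{T}|=|\bar{ST}|$, so that the injectivity coming immediately from linear independence of the $\boldsymbol{a}_k$ already yields bijectivity.
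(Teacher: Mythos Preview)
Your proposal is correct. In fact, your final sentence \emph{is} the paper's entire proof: the paper simply observes that $\boldsymbol{\Theta}\mapsto\boldsymbol{a}\circ\boldsymbol{\Theta}$ is linear between spaces of equal dimension $K|\bar{T}|=|\bar{S}||\bar{T}|$, that the basis hypothesis gives injectivity, and hence bijectivity. Your lead argument---fixing $t\in\bar{T}$ and reading off the coordinates of each slice $\phi^{(t)}$ in the basis $(\boldsymbol{a}_k)$---is a more constructive unpacking of the same idea, and it has the small advantage of actually exhibiting $\boldsymbol{\Theta}$ rather than merely asserting its existence. Your matricisation packaging ($A^\top\Xi=\Psi$ with $A$ invertible) is likewise correct and is precisely what the paper sketches in the paragraph immediately following its proof, where it remarks that the proposition is just an ``un-flattened'' form of matrix inversion. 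So nothing is missing; you have given three equivalent views where the paper formally records only the abstract one.
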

\begin{proof}
Observe that $\boldsymbol{\Theta}\mapsto\boldsymbol{a}\circ\boldsymbol{\Theta}$ is a linear mapping from the space of tensors of shape $\tuple{K}T$ into the space of tensors of shape $ST$. The assumption ($\boldsymbol{a}$ is a basis) implies that it is injective, and since the two spaces have the same dimension, the mapping is an isomorphism.
\end{proof}
The expression $\sum_k\boldsymbol{a}_k{\otimes}\boldsymbol{\Theta}_k$ is formally similar to a linear combination of the basis tensors $(\boldsymbol{a}_k)_{k=1:K}$, except that the coefficients $\boldsymbol{\Theta}_k$ are tensors and scalar multiplication is replaced by tensor product. In the special case where $S$ and $T$ are both of length $1$, then $\boldsymbol{\Phi},\boldsymbol{a}$ and $\boldsymbol{\Theta}$ are all matrices, and Proposition~\ref{prop:inversion} states that, if $\boldsymbol{a}$ is invertible, any matrix $\boldsymbol{\Phi}$ can be factorised as $\boldsymbol{\Phi}{=}\boldsymbol{a}^\top\boldsymbol{\Theta}$ in a unique way. This is indeed straightforward, and $\boldsymbol{\Theta}$ has, in that case, a simple form: $\boldsymbol{\Theta}{=}\boldsymbol{a}^{-1\top}\boldsymbol{\Phi}$. The general case of arbitrary $S,T$ can be derived from that special case by first flattening the tensors involved into matrices, then using the special case to state the property on those matrices, and finally reformulating it in the original tensor space by ``un-flattening''. In other words, Proposition~\ref{prop:inversion} is nothing but an ``un-flattened'' form of matrix inversion.
\subsection{Convolutions as factorised linear transforms}
In a convolution layer, the input does not consist of a simple embedding vector, as in a standard linear layer. Instead, it is a matrix $\boldsymbol{x}$ of shape $\tuple{M,P}$, representing a bundle of $M$ entries encoded as vectors of shape $\tuple{P}$. Similarly, the output $\boldsymbol{y}$ is a matrix of shape $\tuple{N,Q}$ ($N$ entries encoded with shape $\tuple{Q}$). For example, in image convolutions, $M,P$ are the number of pixels and channels, respectively, of the input image, while $N,Q$ are those of the output image. More generally $\boldsymbol{x}$ and $\boldsymbol{y}$ could be tensors --- e.g. images are usually thought of as ternary tensors --- but a tensor can always be flattened into a matrix, or even a vector (see Section~\ref{sec:tensors}). Matricisation, rather than full vectorisation, is used here in order to keep separate the uncontrolled, structural dimensions (width and height in images, of size $M$ in input and $N$ in output) from the controlled ones (channels, of size $P$ in input and $Q$ in output). By analogy with a simple linear layer, the most general form of a convolution layer is an arbitrary linear transform, given by
\begin{align}
\label{eqn:linear}
\boldsymbol{y}_{nq} & \eqnrel{=} {\sum}_{mp}\boldsymbol{x}_{mp}\boldsymbol{\Phi}_{mnpq}
\end{align}
Tensor $\boldsymbol{\Phi}$, of shape $\tuple{M,N,P,Q}$, induces (linear) dependencies between each component of each input entry in $\boldsymbol{x}$ and each component of each output entry in $\boldsymbol{y}$. Using an arbitrary $\boldsymbol{\Phi}$ directly as parameter of the convolution is not satisfactory. First, its shape depends on the numbers $M,N$ of input and output entries: $M,N$ may vary for different instances of the data, or may be too large to be involved in the size of a parameter\footnote{The dependence on $P,Q$, on the other hand, is not problematic, since these are hyper-parameters controlled by the model (embedding sizes).}. Furthermore, in Equation~\eqref{eqn:linear}, the structural dependencies between the $M$ input and $N$ output entries are not captured.

We propose to capture this structure as a tensor $\boldsymbol{A}$ of shape $\tuple{K,M,N}$, for some integer $K$, and to constrain $\boldsymbol{\Phi}$ to be in the partially factorised, low rank form:
\begin{align}
\label{eqn:factorisation}
\boldsymbol{\Phi} & \eqnrel{=} \boldsymbol{A}\circ\boldsymbol{\Theta}
\hspace{.5cm}\left(={\sum}_k\boldsymbol{A}_k\otimes\boldsymbol{\Theta}_k\right)
\end{align}
where $\boldsymbol{\Theta}$ is a tensor of shape $\tuple{K,P,Q}$. Integer $K$ is assumed to be a hyper-parameter controlled by the model, so $\boldsymbol{\Theta}$ has a fully controlled shape and is chosen as parameter of the convolution. Tensor $\boldsymbol{A}$ on the other hand characterises the structure underlying the convolution, and can be viewed as a family $(\boldsymbol{A}_k)_{k=1:K}$ of matrices (weighted graphs between input and output entries). The variety of existing convolution mechanisms derives from various choices for $K$ and $\boldsymbol{A}$ (called resp. the {\em size} and {\em basis} of the convolution), which obey different intuitions in different domains. Examples are given below. But in general, combining Equations~\eqref{eqn:linear} and~\eqref{eqn:factorisation} together, we obtain a formula for convolution over arbitrary structures:
\begin{proposition}
A {\em convolution} of basis $\boldsymbol{A}$, a tensor of shape $\tuple{K,M,N}$, and parameter $\boldsymbol{\Theta}$, a tensor of shape $\tuple{K,P,Q}$, is a linear transform which maps a bundle of inputs $\boldsymbol{x}$ represented as a matrix of shape $\tuple{M,P}$, into a bundle of outputs $\boldsymbol{y}$ represented as a matrix of shape $\tuple{N,Q}$, according to the rule
\begin{align}
\label{eqn:convolution}
\boldsymbol{y} & \eqnrel{=} {\sum}_k\boldsymbol{A}_k^\top\boldsymbol{x}\boldsymbol{\Theta}_k
\end{align}
\end{proposition}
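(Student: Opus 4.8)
The plan is to prove \eqref{eqn:convolution} by a direct substitution of the factorisation \eqref{eqn:factorisation} into the general linear transform \eqref{eqn:linear}, followed by a regrouping of the (finite) sums into matrix products. First I would expand the constraint $\boldsymbol{\Phi}=\boldsymbol{A}\circ\boldsymbol{\Theta}=\sum_k\boldsymbol{A}_k\otimes\boldsymbol{\Theta}_k$ componentwise. Since $\boldsymbol{A}_k$ has shape $\tuple{M,N}$ and $\boldsymbol{\Theta}_k$ has shape $\tuple{P,Q}$, the definition of slicing and outer product recalled in Section~\ref{sec:tensors} gives $\boldsymbol{\Phi}_{mnpq}=\sum_k(\boldsymbol{A}_k)_{mn}(\boldsymbol{\Theta}_k)_{pq}$.

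Next I would insert this into \eqref{eqn:linear}, yielding $\boldsymbol{y}_{nq}=\sum_{mp}\boldsymbol{x}_{mp}\sum_k(\boldsymbol{A}_k)_{mn}(\boldsymbol{\Theta}_k)_{pq}$, and then exchange the order of the three finite summations to obtain $\boldsymbol{y}_{nq}=\sum_k\sum_m(\boldsymbol{A}_k)_{mn}\big(\sum_p\boldsymbol{x}_{mp}(\boldsymbol{\Theta}_k)_{pq}\big)$. The inner sum over $p$ is exactly the $(m,q)$ entry of the matrix product $\boldsymbol{x}\boldsymbol{\Theta}_k$; the outer sum over $m$ then contracts the first (input) index of $\boldsymbol{A}_k$ with the first index of $\boldsymbol{x}\boldsymbol{\Theta}_k$, which is precisely the $(n,q)$ entry of $\boldsymbol{A}_k^\top\boldsymbol{x}\boldsymbol{\Theta}_k$. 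Summing over $k$ recovers \eqref{eqn:convolution}.

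The only point needing care — and the closest thing to an obstacle — is the index bookkeeping: one must track that the structural contraction runs over the input index $m$ rather than the output index $n$, which is exactly what forces the transpose $\boldsymbol{A}_k^\top$ in the final formula, and that the channel contraction runs over $p$. Once the index names are pinned down, the identity is immediate; in particular no appeal to Proposition~\ref{prop:inversion} or to any basis/dimension argument is required, since here we are merely rewriting a prescribed $\boldsymbol{\Phi}$ in factorised form, not inverting the construction.
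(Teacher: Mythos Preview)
Your proposal is correct and is exactly the derivation the paper has in mind: the paper simply states that the formula follows by ``combining Equations~\eqref{eqn:linear} and~\eqref{eqn:factorisation} together'' without writing out the index manipulations, and you have spelled those out accurately, including the reason for the transpose on $\boldsymbol{A}_k$.
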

Note that our model of structural dependencies is flexible. If $(\boldsymbol{A}_k)_{k=1:K}$ is taken to be a basis of the whole space of matrices of shape $\tuple{M,N}$, then by Proposition~\ref{prop:inversion} any $\boldsymbol{\Phi}$ can be written as $\boldsymbol{A}\circ\boldsymbol{\Theta}$, and the resulting class of convolutions is the class of arbitrary linear transforms. But of course, this assumes $K{=}MN$, which is uncontrolled. At the other end of the spectrum, if $K{=}1$ and $\boldsymbol{A}_1$ is the identity matrix, the input entries are processed identically and fully independently, leading to a degenerate class of convolutions also known in the image domain as $1{\times}1$ convolutions. In fact, Equation~\eqref{eqn:factorisation} can be viewed as a truncated, low-rank version of the factorisation of $\boldsymbol{\Phi}$ defined by Proposition~\ref{prop:inversion} where family $(\boldsymbol{A}_k)_{k=1:K}$ is seen as a subset of a basis (of the whole space of matrices of shape $\tuple{M,N}$), of which the other members are ignored. $(\boldsymbol{A}_k)_{k=1:K}$ act as ``principal components''. 
\begin{proposition}
\label{prop:stack}
Given two convolutions of size $K',K''$, basis $\boldsymbol{A}',\boldsymbol{A}''$, parameter $\boldsymbol{\Theta}',\boldsymbol{\Theta}''$, respectively, their composition, when the dimensions match (i.e. $\tuple{N',Q'}{=}\tuple{M'',P''}$), is a convolution of size $K$, basis $\boldsymbol{A}$, parameter $\boldsymbol{\Theta}$ where
\[
K = K'K''
\hspace{1cm}
\boldsymbol{A}_{\omega(k',k'')} = \boldsymbol{A}'_{k'}\boldsymbol{A}''_{k''}
\hspace{1cm}
\boldsymbol{\Theta}_{\omega(k',k'')} = \boldsymbol{\Theta}'_{k'}\boldsymbol{\Theta}''_{k''}
\]
and $\omega$ is a bijective mapping $\{1{\cdots}K'\}{\times}\{1{\cdots}K''\}{\mapsto}\{1{\cdots}K\}$, e.g. the canonical bijection $\omega_{\tuple{K',K''}}$.
\end{proposition}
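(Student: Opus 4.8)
The plan is to unfold the two convolutions via Equation~\eqref{eqn:convolution} and collect terms. The hypothesis $\tuple{N',Q'}{=}\tuple{M'',P''}$ says exactly that the output matrix $\boldsymbol{y}'$ of the first convolution, of shape $\tuple{N',Q'}$, has the shape $\tuple{M'',P''}$ required as input of the second; hence the composite sends $\boldsymbol{x}'{:}\tuple{M',P'}$ to $\boldsymbol{y}''{=}\sum_{k''}(\boldsymbol{A}''_{k''})^\top\boldsymbol{y}'\boldsymbol{\Theta}''_{k''}$ with $\boldsymbol{y}'{=}\sum_{k'}(\boldsymbol{A}'_{k'})^\top\boldsymbol{x}'\boldsymbol{\Theta}'_{k'}$, and the whole proof amounts to rewriting this into the form of Equation~\eqref{eqn:convolution}.

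First I would substitute the expression for $\boldsymbol{y}'$ into the one for $\boldsymbol{y}''$ and use the distributivity and associativity of matrix multiplication to flatten everything into a single double sum $\boldsymbol{y}''{=}\sum_{k',k''}(\boldsymbol{A}''_{k''})^\top(\boldsymbol{A}'_{k'})^\top\boldsymbol{x}'\boldsymbol{\Theta}'_{k'}\boldsymbol{\Theta}''_{k''}$. The key algebraic step is then the order-reversing identity for the transpose of a product, $(\boldsymbol{A}''_{k''})^\top(\boldsymbol{A}'_{k'})^\top{=}(\boldsymbol{A}'_{k'}\boldsymbol{A}''_{k''})^\top$; applying it rewrites each summand as $(\boldsymbol{A}'_{k'}\boldsymbol{A}''_{k''})^\top\boldsymbol{x}'(\boldsymbol{\Theta}'_{k'}\boldsymbol{\Theta}''_{k''})$, which with the definitions $\boldsymbol{A}_{\omega(k',k'')}{=}\boldsymbol{A}'_{k'}\boldsymbol{A}''_{k''}$ and $\boldsymbol{\Theta}_{\omega(k',k'')}{=}\boldsymbol{\Theta}'_{k'}\boldsymbol{\Theta}''_{k''}$ is exactly $\boldsymbol{A}_{\omega(k',k'')}^\top\boldsymbol{x}'\boldsymbol{\Theta}_{\omega(k',k'')}$.

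Before invoking that identity I would check the shapes, since this is where the bookkeeping matters: $\boldsymbol{A}'_{k'}$ is $\tuple{M',N'}$ and $\boldsymbol{A}''_{k''}$ is $\tuple{M'',N''}{=}\tuple{N',N''}$, so the product $\boldsymbol{A}'_{k'}\boldsymbol{A}''_{k''}$ is well defined of shape $\tuple{M',N''}$; likewise $\boldsymbol{\Theta}'_{k'}$ is $\tuple{P',Q'}$ and $\boldsymbol{\Theta}''_{k''}$ is $\tuple{P'',Q''}{=}\tuple{Q',Q''}$, so $\boldsymbol{\Theta}'_{k'}\boldsymbol{\Theta}''_{k''}$ is $\tuple{P',Q''}$. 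Thus $\boldsymbol{A}$ has shape $\tuple{K,M',N''}$ and $\boldsymbol{\Theta}$ has shape $\tuple{K,P',Q''}$ with $K{=}K'K''$, and we read off $M{=}M'$, $N{=}N''$, $P{=}P'$, $Q{=}Q''$, which is consistent since the composite does map $\tuple{M',P'}$ to $\tuple{N'',Q''}$.

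Finally I would re-index the double sum over $(k',k'')\in\{1{\cdots}K'\}{\times}\{1{\cdots}K''\}$ as a single sum over $k{=}\omega(k',k'')\in\{1{\cdots}K\}$: because $\omega$ is a bijection each $k$ occurs exactly once, so the sum is unchanged and $\boldsymbol{y}''{=}\sum_k\boldsymbol{A}_k^\top\boldsymbol{x}'\boldsymbol{\Theta}_k$, which is Equation~\eqref{eqn:convolution} for the claimed convolution; the particular choice of $\omega$ (e.g. $\omega_{\tuple{K',K''}}$) plays no role. I expect no genuine obstacle here — the argument is a routine computation — and the only care needed is in tracking the dimensions and the order reversal in the transpose identity.
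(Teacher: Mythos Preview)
Your proposal is correct and is exactly the routine computation the paper has in mind: the paper's own proof is the one-line ``Simple application of Equation~\eqref{eqn:convolution}'', and your substitution, transpose identity, shape check, and bijective re-indexing are precisely what that application amounts to. Nothing further is needed.
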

\begin{proof}
Simple application of Equation~\eqref{eqn:convolution}.
\end{proof}
\begin{figure}
\begin{center}
\begin{tikzpicture}[scale=.8]
\node[label={[label distance=-9pt,opacity=.5]270:{\small $\boldsymbol{Y}$: output}}] (bnq) at (0,0) {$bnq$};
\node[label={[label distance=-5pt]90:{$\boldsymbol{A}$: basis}}] (kmn) at (0,3) {$kmn$};
\node[label={[label distance=-7pt]240:{$\boldsymbol{X}$: input}}] (bmp) at ({-3*sqrt(3)/2},-1.5) {$bmp$};
\node[label={[label distance=-7pt]330:{$\boldsymbol{\Theta}$: parameter}}] (kpq) at ({3*sqrt(3)/2},-1.5) {$kpq$};
\node (bknp) at ($(bmp)!.5!(kmn)$) {$bknp$};
\node (mnpq) at ($(kmn)!.5!(kpq)$) {$mnpq$};
\node (bkmq) at ($(bmp)!.5!(kpq)$) {$bkmq$};
\draw [->,green] (bmp) -- (bknp); \draw [->,green] (kmn) -- (bknp); \draw [->,green] (kpq) -- (bnq); \draw [->,green] (bknp) -- (bnq);
\draw [->,blue] (kmn) -- (mnpq); \draw [->,blue] (kpq) -- (mnpq); \draw [->,blue] (bmp) -- (bnq); \draw [->,blue] (mnpq) -- (bnq);
\draw [->,red] (bmp) -- (bkmq); \draw [->,red] (kpq) -- (bkmq); \draw [->,red] (kmn) -- (bnq); \draw [->,red] (bkmq) -- (bnq);

\end{tikzpicture}
\end{center}
\caption{\label{fig:computing}A representation of three alternatives (red-green-blue, each starting from one side of the triangle) to compute a convolution $\boldsymbol{Y}$ (order-3 tensor at the centre). The vertices of the triangle are the order-3 tensors involved ($\boldsymbol{X}$: input, $\boldsymbol{\Theta}$: parameter, $\boldsymbol{A}$: basis) with their respective indices ($b$: batch index, $m/n$: input/output entry, $p/q$: input/output channel, $k$: basis index). The arrows represent sum-product operations in the so called Einstein's notation. For example, the two arrows $bmp,kpq\rightarrow bkmq$ (bottom) represent an operation yielding the order-4 tensor $R_{bkmq}=\sum_{p}\boldsymbol{X}_{bmp}\boldsymbol{\Theta}_{kpq}$.}
\end{figure}
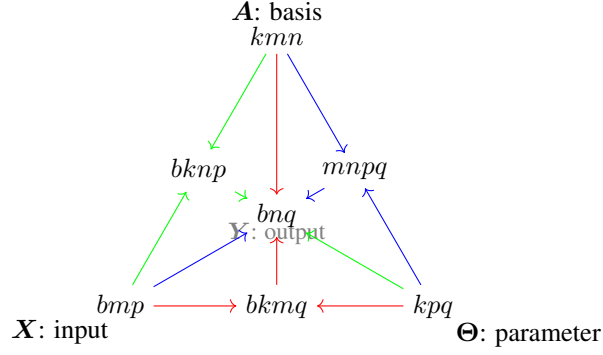
\subsection{Parameter dimension reduction}
Parameter $\boldsymbol{\Theta}$, of shape $\tuple{K,P,Q}$, although controlled, may still be too large and it may be useful to constrain it further. A number of techniques have been proposed to achieve this, in particular:
\begin{itemize}
\item
Grouped convolutions~\cite{krizhevsky_imagenet_2012} constrain each $\boldsymbol{\Theta}_k$ to be a block diagonal matrix of shape $\tuple{P,Q}$. If $\nu$ is the number of blocks, which must be a divisor of both $P$ and $Q$, each block is of shape $\tuple{\frac{P}{\nu},\frac{Q}{\nu}}$, of size $\frac{PQ}{\nu^2}$, so the total parameter size is $\frac{KPQ}{\nu}$ instead of $KPQ$. Furthermore, in the implementation of the convolution operator, the blocks can be processed in parallel.
\item
Depth-wise separable convolutions~\cite{chollet_xception:_2016} constrain $\boldsymbol{\Theta}$ to satisfy
$\boldsymbol{\Theta}_{kpq}{=}\boldsymbol{\Theta}^{\textrm{(1)}}_{kp}\boldsymbol{\Theta}^{\textrm{(2)}}_{pq}$
where $\boldsymbol{\Theta}^{\textrm{(1)}}$ and $\boldsymbol{\Theta}^{\textrm{(2)}}$ are matrices of shape $\tuple{K,P}$ and $\tuple{P,Q}$, respectively. The total parameter size is therefore $KP{+}PQ$ instead of $KPQ$.
\end{itemize}
We propose here another model, called ``controlled separability'', which constrains $\boldsymbol{\Theta}$ to be of the form
\begin{align}
\label{eqn:separable}
\boldsymbol{\Theta} & \eqnrel{=} \boldsymbol{\Theta}^{(\textrm{basis})}\circ\boldsymbol{\Theta}^{(\textrm{channel})}
\end{align}
where $\boldsymbol{\Theta}^{(\textrm{basis})}$ is a matrix of shape $\tuple{H,K}$ (for some integer $H$) and $\boldsymbol{\Theta}^{(\textrm{channel})}$ a tensor of shape $\tuple{H,P,Q}$. It allows to decouple the number $K$ of basis matrices from the number $H$ of parameter matrices of shape $\tuple{P,Q}$. The total parameter size is therefore $H(K{+}PQ)$ instead of $KPQ$, which is useful only if $H{<}K{\ll}PQ$. In normal convolutions, doubling $K$ mechanically doubles the size of $\boldsymbol{\Theta}$. With controlled separability, $H$ can be kept unchanged, so the size of $\boldsymbol{\Theta}^{(\textrm{channel})}$ remains the same, while only doubling the size of the much smaller $\boldsymbol{\Theta}^{(\textrm{basis})}$. Equation~\eqref{eqn:factorisation} becomes
\[
\boldsymbol{\Phi} = {\sum}_{hk}\boldsymbol{\Theta}^{(\textrm{basis})}_{hk}\boldsymbol{A}_k\otimes\boldsymbol{\Theta}^{(\textrm{channel})}_h
\]
which expresses a partial version of the Tucker factorisation scheme~\cite{rabanser_introduction_2017} rather than the CanDecomp scheme captured by Equation~\eqref{eqn:mixed-product}. Controlled separability can also be combined with the other forms of dimension reduction cited above, by simply applying them to $\boldsymbol{\Theta}^{(\textrm{channel})}$ rather than $\boldsymbol{\Theta}$ directly. Yet another form is discussed in Section~\ref{sec:transformer}.
\subsection{A note on the computation of convolutions}
In practice, the input and output entries are usually batched. Batched input $\boldsymbol{X}$ and output $\boldsymbol{Y}$ are given by tensors of shape $\tuple{B,M,P}$ and $\tuple{B,N,Q}$, respectively, where $B$ is the batch size. Figure~\ref{fig:computing} shows the three alternatives to compute $\boldsymbol{Y}$ (at the centre of the triangle) as a function of $\boldsymbol{A},\boldsymbol{X},\boldsymbol{\Theta}$ (on the vertices of the triangle), according to the convolution formula of Equation~\eqref{eqn:convolution} extended to batches:
\[
\boldsymbol{Y}_b = {\sum}_k\boldsymbol{A}_k^\top\boldsymbol{X}_b\boldsymbol{\Theta}_k
\]
Which of these alternatives should be used essentially depends on the respective dimensions $M,P,N,Q,B,K$. In any case, operations involving the basis tensor $\boldsymbol{A}$ may require a specific treatment, since it is usually very sparse, and sometimes possesses a regularity which can be exploited for optimal computation, as in the case of the ``shift matrices'' of grid convolution (Figure~\ref{fig:grid-shift}).
\section{Some examples}
\subsection{Grid convolutions}
\label{sec:grid}
\begin{figure}
\begin{center}
\begin{tabular}{c@{\hspace{1.5cm}}c}
\begin{tabular}{c}
\begin{tikzpicture}[scale=.5]
\foreach \i in {1,...,10} { \node[label={[shift={(-.25,0)}]0:\tiny\i}] at (\i,0) {\tiny $\bullet$}; }
\foreach \i in {1,2,5,6} { \draw [->,blue] let \n{n}={\i+2} in (\i,0) to[bend right=90] (\n{n},0); }
\foreach \i in {3,4,7,8} { \draw [->,blue] let \n{n}={\i+2} in (\i,0) to[bend left=90] (\n{n},0); }

\end{tikzpicture}
\\
\includegraphics[scale=.22]{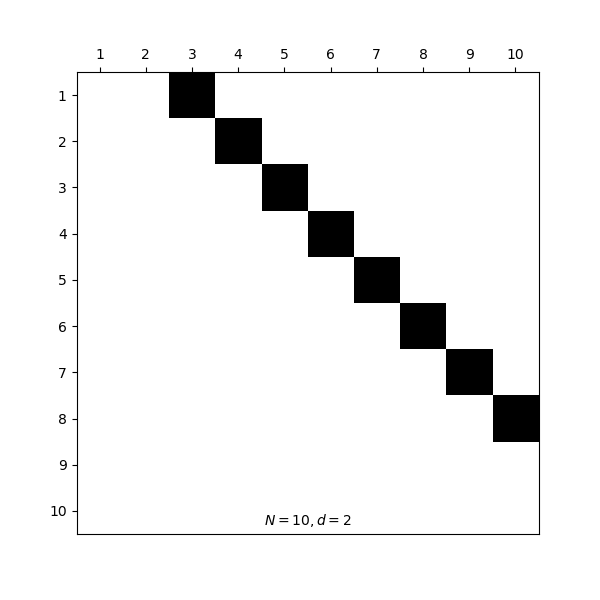}
\end{tabular}
&
\begin{tabular}{c@{\hspace{.5cm}}c}
\begin{tikzpicture}[scale=.5]
\foreach \i in {1,...,8} { \foreach \j in {1,...,10} { \node at (\j,\i) {\tiny $\bullet$}; } }
\foreach \i in {3,...,8} { \foreach \j in {1,...,6} { \draw [->,blue] let \n{i}={\i-2},\n{j}={\j+4} in (\j,\i) to[bend right=65] (\n{j},\n{i}); } }
\node at (1.3,1.2) {\tiny $71$}; \node at (9.7,1.2) {\tiny $80$};
\node at (1.3,7.8) {\tiny $1$}; \node at (9.7,7.8) {\tiny $10$};

\end{tikzpicture}
&
\includegraphics[scale=.3]{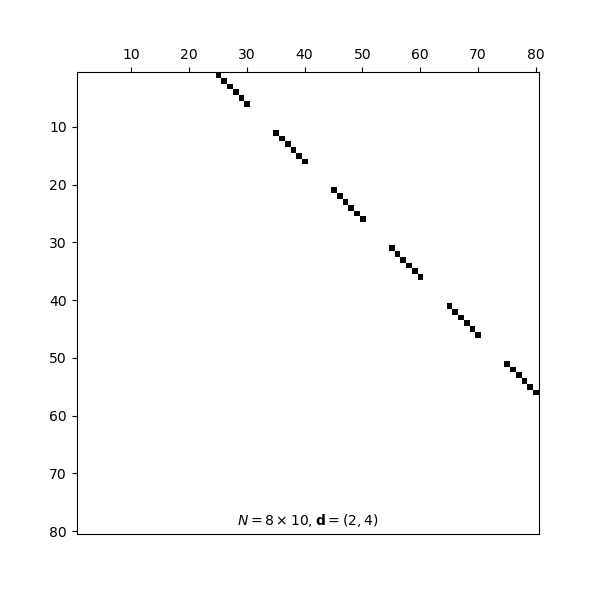}
\end{tabular}
\end{tabular}
\end{center}
\caption{\label{fig:grid-shift}Shift matrices constitute the basis of grid convolutions. Left: shift by $2$ in a 1-D grid of dimension $10$; Right: shift by $(2,4)$ in a 2-D grid of dimensions $8{\times}10$ (flattened by the canonical mapping into the sequence $1{:}80$).}
\end{figure}
A grid is the index set $\bar{S}$ associated with a given sequence of integers $S$. In the case of images, the archetypal grids, $S$ is the sequence $\tuple{\textrm{width},\textrm{height}}$ of length $|S|{=}2$. We assume given some bijective mapping $\omega{:}\bar{S}{\mapsto}\{1\cdots N\}$ where $N{=}|\bar{S}|$, for example the canonical mapping $\omega_S$ (see Section~\ref{sec:tensors}). In this way, an embedding of the whole grid, which would naturally be represented by a tensor of shape $S\tuple{L}$ where each node in the grid is encoded as a vector of shape $\tuple{L}$, can be matricised (see Section~\ref{sec:tensors}) into a matrix of shape $\tuple{N,L}$ as used in our model. Let's first consider convolutions which preserve the grid, hence $M{=}N$.
\begin{definition}
For each integer valued vector $\boldsymbol{d}{\in}\mathbb{Z}^{|S|}$, we define the {\em shift matrix} $\mathcal{A}_{\boldsymbol{d}}$ of shape $\tuple{N,N}$ by
\[
(\mathcal{A}_{\boldsymbol{d}})_{mn} \triangleq \mathbb{I}[\omega^{-1}n-\omega^{-1}m=\boldsymbol{d}]
\]
A {\em grid convolution} of size $K$ and basis $\boldsymbol{A}$ is one such that for each $k{\in}1{:}K$, $\boldsymbol{A}_k{=}\mathcal{A}_{\boldsymbol{\Delta}_k}$ for some $\boldsymbol{\Delta}_k{\in}\mathbb{Z}^{|S|}$.
\end{definition}
Thus, $\mathcal{A}_{\boldsymbol{d}}$ is the adjacency matrix of the relation: ``node $n$ is obtained from node $m$ by a shift of $\boldsymbol{d}$ in the grid''. It is illustrated in Figure~\ref{fig:grid-shift} in the case of grids of order $|S|{=}1,2$ (typically, sentences and images). With some padding conventions, Equation~\eqref{eqn:convolution} for a grid convolution becomes, for any node $s{\in}\bar{S}$ in the grid,
\[
\boldsymbol{y}_{(\omega(s))} =
{{\sum}}_k\boldsymbol{x}_{(\omega(s-\boldsymbol{\Delta}_k))}\boldsymbol{\Theta}_k
\]
The traditional grid (image) convolutions of ``Convolutional Neural Networks'' (CNNs)~\cite{lecun_convolutional_1998} are exactly obtained by choosing $\boldsymbol{\Delta}$ to be a regular right cuboid with possibly different strides $\boldsymbol{\delta}_i$ and offsets $\boldsymbol{\epsilon}_i$ in the different grid dimensions $i{=}1{:}|S|$. In that case, we have $K=\prod_{i=1:|S|}K_i$ and for each $k{\in}1{:}K$
\begin{align}
\label{eqn:cnn-shift}
\boldsymbol{\Delta}_{ki} & \eqnrel{=} \boldsymbol{\epsilon}_i+(\omega_{\tuple{K_1\cdots K_{|S|}}}^{-1}k)_i\boldsymbol{\delta}_i
\end{align}
Parameter $\boldsymbol{\Theta}$, of shape $\tuple{K,P,Q}$, then appears as the flattened version of a tensor of shape $\tuple{K_1,\cdots,K_{|S|},P,Q}$, which is the familiar shape of grid convolution kernels.

Variants of grid convolutions which do not necessarily preserve the grid can also be captured in our framework using different choices of $\boldsymbol{\Delta}$, and variants of the shift matrices. This includes average pooling and dilated convolutions, where the output grid is a sub-sample of the input one ($N$ is a divisor of $M$ rather than $M{=}N$). However, our framework covers only convolutions which are linear transforms, which rules out such things as max pooling.

The choice of basis matrices introduced above captures exactly the conditions to ensure that the resulting convolutions satisfy two a priori constraints: {\em translation equivariance} and {\em locality}. To show that, recall that, by Proposition~\ref{prop:inversion}, any linear transform $\boldsymbol{\Phi}$ can be written in the form $\boldsymbol{a}\circ\boldsymbol{\Theta}$ where $(\boldsymbol{a}_h)_{h=1:MN}$ is a basis of the space of matrices of shape $\tuple{M,N}$. One obvious such basis is given by
\[(\boldsymbol{a}_h)_{mn}\triangleq\mathbb{I}[\tau(m,n)=h]\]
where $\tau$ is a given bijection $\{1{\cdots}M\}{\times}\{1{\cdots}N\}{\mapsto}\{1{\cdots}MN\}$, e.g. the canonical bijection $\omega_{\tuple{M,N}}$. Thus, when $h{=}\tau(m,n)$, the term $\boldsymbol{a}_h^\top\boldsymbol{x}\boldsymbol{\Theta}_h$ in Equation~\eqref{eqn:convolution} can be understood as follows: $\boldsymbol{a}_h$ filters the action of node $m$ in the input grid onto node $n$ in the output grid, and $\boldsymbol{\Theta}_h$ specifies the linear transform which must be applied to the input embedding of $m$ to obtain its contribution to the output embedding of $n$.
\begin{itemize}
\item
Translation equivariance means that the action of $m$ on $n$ should be the same as the action of $m'$ on $n'$ where $m',n'$ are obtained from $m,n$ by the same translation, i.e., borrowing a term from geometry, the two pairs $h{=}\tau(m,n)$ and $h'{=}\tau(m',n')$ are {\em equipollent}. In other words, translation equivariance amounts to pooling together the parameter $\boldsymbol{\Theta}_h$ of all the pairs $h{=}\tau(m,n)$ which are equipollent. This amounts to regrouping (summing together) all the corresponding basis matrices $\boldsymbol{a}_{h}$, yielding exactly what is called above a shift matrix.
\item 
By itself, translation equivariance constrains the basis matrices to be shift matrices but does not constrain the size of the shifts, so that a node anywhere in the input grid could still act on any node of the output grid. Locality is achieved by further constraining the shift vectors outside a small neighbourhood of the null vector, as defined by Equation~\eqref{eqn:cnn-shift}, to have a null contribution.
\end{itemize}
\subsection{Graph convolutions}
Let $\mathcal{G}$ be a graph over $\{1\cdots N\}$ given a priori. We assume $M{=}N$ (graph convolutions usually preserve the graph).
\begin{definition}
A {\em graph convolution} of size $K$ and basis $\boldsymbol{A}$ is one such that for each $k{\in}1{:}K$, matrix $\boldsymbol{A}_k$ is constructed from $\mathcal{G}$ by some procedure dependent on $k$.
\end{definition}
The traditional ``Graph Convolution Networks'' (GCNs)~\cite{kipf_semi-supervised_2016} are exactly obtained by choosing $K{=}1$ and $\boldsymbol{A}_1$ to be the normalised Laplacian matrix of $\mathcal{G}$. Constraining the size to $1$ yields a very simple, efficient architecture, at the price of some expressiveness. For example, although grids can be represented as graphs, grid convolutions cannot be expressed as graph convolutions with a size restricted to $1$.

In alternative definitions of graph convolution, the size is possibly greater than $1$, and each $\boldsymbol{A}_k$ is computed from $\mathcal{G}$ in a different way. For example, in the full spectral analysis of graph convolution~\cite{defferrard_convolutional_2016}, each $\boldsymbol{A}_k$ is a Chebyshev polynomial of the normalised Laplacian matrix of $\mathcal{G}$, up to order $K$. In a simpler version~\cite{li_diffusion_2017}, Chebyshev polynomials are replaced by elementary monomials, and $\boldsymbol{A}_k$ is simply the adjacency matrix of $\mathcal{G}$ raised to the power of $k$, capturing the random walks of length $k$ through the graph.

A similar approach can be applied to knowledge graphs~\cite{schlichtkrull_modeling_2017}, by introducing one basis matrix $\boldsymbol{A}_k$ for each random walk sort (instead of length) from a given set of sorts (instead of up to a given length), where a sort is a sequence of relations. For example, in a film knowledge base, a sort could be ``{\tt played.characterIn.genre}'', and a typical instance of random walk of that sort could be ``{\tt LeonardNimoy-Spock-StarTreck-SciFi}''.
\section{Attention as content-based convolution}
\subsection{Content-based vs index-based convolution}
In the previous examples of convolution, the basis tensor captures prior knowledge about the structural relationships between input and output entries through their indices. This is not the only option. Instead of relying solely on indices, the basis tensor of a convolution can also be computed from any content associated with the input and output entries. We propose a generic model to achieve this, and claim that it captures the essence of many attention mechanisms:~\cite{vaswani_attention_2017,elbayad_pervasive_2018,velickovic_graph_2017,cinar_period-aware_2018,cinar_position-based_2017,kool_attention_2018}.
\begin{definition}
An {\em attention mechanism} is a parametrised mapping which takes as input two matrices, of shape $\tuple{M,P'}$ and $\tuple{N,Q'}$, respectively, and returns an output matrix of shape $\tuple{M,N}$. The input matrices represent $M$ and $N$ entries encoded as vectors of shape $\tuple{P'}$ and $\tuple{Q'}$, respectively, and the output matrix represents an influence graph of the former on the latter, based on their encodings.
\end{definition}
Attention mechanisms can be added or multiplied term-wise, or transformed by term-wise, row-wise or column-wise normalisation. Two particularly useful transformations are {\em masking} and (column-wise or row-wise) {\em softmax} normalisation, often used in conjunction. Masking is described here in log domain: given a mask as an a priori matrix $\boldsymbol{H}$ of shape $\tuple{M,N}$ with values in $\{-\infty,0\}$ (the $\log$ of a binary matrix), if $a$ is an attention mechanism, then one can straightforwardly form the mechanism $a{+}\boldsymbol{H}$: it masks (sets to $-\infty$) the output of $a$ wherever $\boldsymbol{H}$ is $-\infty$ leaving the other values unchanged. In particular, if $\boldsymbol{H}$ is sparse, i.e. the density of $-\infty$ is high, then $a{+}\boldsymbol{H}$ is also sparse, i.e. the density of masked values is high, whatever the sparseness of $a$. This is useful when the dimensions $M,N$ are large and the size $MN$ of the output of $a$ becomes unmanageable. Masking allows to limit a priori which entries from the first input can influence entries from the second input. Observe that when softmax normalisation is applied to a masked attention, the masked values become $0$, cancelling the influence of the corresponding inputs in the linear domain\footnote{As a general rule, softmax takes input in log domain (scores) and produces output in linear domain (probabilities).}.
\begin{definition}
An {\em attention convolution} of size $K$ and basis $\boldsymbol{A}$ is one such that for each $k{=}1{:}K$, $\boldsymbol{A}_k{=}a(\boldsymbol{x}',\boldsymbol{y}';\boldsymbol{\Xi}_k)$ for some attention mechanism $a$ and some $\boldsymbol{\Xi}_k$ in the parameter space of $a$. The convolution now has three input matrices, the main input $\boldsymbol{x}$ of shape $\tuple{M,P}$, and two auxiliary inputs $\boldsymbol{x}',\boldsymbol{y}'$ of shape $\tuple{M,P'},\tuple{N,Q'}$ respectively, and returns an output matrix $\boldsymbol{y}$ of shape $\tuple{N,Q}$ according to Equation~\eqref{eqn:convolution}, which can be rewritten:
\begin{align}
\label{eqn:attention}
\boldsymbol{y} & =
{\sum}_ka(\boldsymbol{x}',\boldsymbol{y}';\boldsymbol{\Xi}_k)^\top\boldsymbol{x}\boldsymbol{\Theta}_k
\end{align}
In {\em cross-attention} (resp. {\em self-attention}) convolutions, the main input $\boldsymbol{x}$ is also used as the auxiliary input $\boldsymbol{x}'$ (resp. as both $\boldsymbol{x}',\boldsymbol{y}'$), which, to be shape-consistent, requires $P'{=}P$ (resp. $N{=}M$ and $P'{=}Q'{=}P$).
\begin{center}
\begin{tikzpicture}
\tikzset{
port/.style={circle,minimum height=1.3mm,inner sep=0,fill=black},
var/.style={fill=white},
bbox/.style={fill=black!5}
}
\draw[bbox] (0.,.3) rectangle (1.7,-1.3);
\node[port] (inM) at (0.,0.) {};
\node[port] (in1) at (0.,-.5) {};
\node[port] (in2) at (0.,-1.) {};
\node[port] (out) at (1.7,0.) {};
\node at (.85,-1.5) {attention};
\draw (-.5,0.) node[var] {$\boldsymbol{x}$} -- (0.,0.);
\draw (-.5,-.5) node[var] {$\boldsymbol{x}'$} -- (0.,-.5);
\draw (-.5,-1.) node[var] {$\boldsymbol{y}'$} -- (0.,-1.);
\draw (out) -- (2.2,0.) node[var] {$\boldsymbol{y}$};
\node[draw,rectangle,minimum height=.8cm] (am) at (.5,-.75) {$a$};
\node (A) at (1.2,-.75) {$\boldsymbol{A}$};
\node[draw,rectangle] (cv) at (1.2,0.) {$*$};
\draw[->] (in1.east) -- (in1-|am.west); \draw[->] (in2.east) -- (in2-|am.west); \draw[->] (am) -- (A);
\draw[->] (A) -- (cv); \draw[->] (inM) -- (cv); \draw[->] (cv) -- (out);
\begin{scope}[xshift=4cm]
\draw[bbox] (0.,.3) rectangle (.5,-1.3);
\node[port] at (0.,0.) {};
\node[port] at (0.,-.5) {};
\node[port] at (0.,-1.) {};
\node[port] at (.5,0.) {};
\node at (.25,-1.5) {cross-attention};
\draw (-.5,0.) node[var] {$\boldsymbol{x}$} -- (0.,0.);
\draw (-.15,0.) rectangle (0.,-.5);
\draw (-.5,-1.) node[var] {$\boldsymbol{y}'$} -- (0.,-1.);
\draw (.5,0.) -- (1.,0.) node[var] {$\boldsymbol{y}$};
\end{scope}
\begin{scope}[xshift=6.5cm]
\draw[bbox] (0.,.3) rectangle (.5,-1.3);
\node[port] at (0.,0.) {};
\node[port] at (0.,-.5) {};
\node[port] at (0.,-1.) {};
\node[port] at (.5,0.) {};
\node at (.25,-1.5) {self-attention};
\draw (-.5,0.) node[var] {$\boldsymbol{x}$} -- (0.,0.);
\draw (-.15,0.) rectangle (0.,-.5);
\draw (-.15,0.) rectangle (0.,-1.);
\draw (.5,0.) -- (1.,0.) node[var] {$\boldsymbol{y}$};
\end{scope}

\end{tikzpicture}
\end{center}
\end{definition}
A graph or grid convolution, as described in the previous sections, can be seen as a degenerate case of attention convolution, in which the output of the mechanism does not depend on its input, but solely on its parameter, given a priori (not learnt). The resulting convolution thus ignores its auxiliary inputs and is linear in its main input. On the other hand, in the non degenerate case, an attention convolution may be non linear in either of its auxiliary inputs, depending on the mechanism. Furthermore, a self- or cross-attention convolution may not even be linear in its main input, since it also occurs as input to the mechanism. A commonly used attention mechanism is bi-affine attention:
\begin{definition}
Let $\xi$ be a scalar, $\boldsymbol{\mu},\boldsymbol{\nu}$ be vectors of shape, respectively, $\tuple{P'},\tuple{Q'}$, and $\boldsymbol{\Lambda}$ be a matrix of shape $\tuple{P',Q'}$. The {\em bi-affine} attention mechanism $\mathcal{A}$ parametrised by $\boldsymbol{\Xi}{=}\tuple{\xi,\boldsymbol{\mu},\boldsymbol{\nu},\boldsymbol{\Lambda}}$ is defined, for matrices $\boldsymbol{x}',\boldsymbol{y}'$ of shape, respectively, $\tuple{M,P'}$ and $\tuple{N,Q'}$, by
\begin{align}
\label{eqn:biaffine-attention}
\mathcal{A}(\boldsymbol{x}',\boldsymbol{y}';\boldsymbol{\Xi}) & \eqnrel{\triangleq}
\boldsymbol{x}'\boldsymbol{\Lambda}\boldsymbol{y}'^T+
(\boldsymbol{x}'\boldsymbol{\mu})\otimes\mathbf{1}_{N}+
\mathbf{1}_{M}\otimes(\boldsymbol{y}'\boldsymbol{\nu})+
\xi\mathbf{1}_{M}\otimes\mathbf{1}_{N}\\
\nonumber
\textrm{equivalently,}\hspace{1cm}
\mathcal{A}(\boldsymbol{x}',\boldsymbol{y}';\boldsymbol{\Xi})_{mn}  & \eqnrel{=}
{\sum}_{p'q'}\boldsymbol{\Lambda}_{p'q'}\boldsymbol{x}'_{mp'}\boldsymbol{y}'_{nq'}+
{\sum}_{p'}\boldsymbol{\mu}_{p'}\boldsymbol{x}'_{mp'}+
{\sum}_{q'}\boldsymbol{\nu}_{q'}\boldsymbol{y}'_{nq'}+
\xi
\end{align}
\end{definition}
Observe that parameter $\boldsymbol{\Xi}$ has a fully controlled shape, independent of $M,N$. Bi-affine attention is used in the specific context of parsing in~\cite{dozat_deep_2017}. It is also used, with some restrictions on parameter $\boldsymbol{\Xi}$, as a generic attention mechanism in the Transformer model for sequences~\cite{vaswani_attention_2017}, and in Graph attention networks~\cite{velickovic_graph_2017}, as shown below.
\subsection{Attention in Graph Attention Networks}
Graph attention networks~\cite{velickovic_graph_2017} are based on a variant of self-attention convolutions, where the equation $\boldsymbol{A}_k{=}a(\boldsymbol{x},\boldsymbol{x};\boldsymbol{\Xi}_k)$ is replaced by $\boldsymbol{A}_k{=}a(\boldsymbol{x}\boldsymbol{\Theta}_k,\boldsymbol{x}\boldsymbol{\Theta}_k;\boldsymbol{\Xi}_k)$. This does not significantly alter forward computation, at least when started from the bottom of the triangle in Figure~\ref{fig:computing}, since the term $\boldsymbol{x}\boldsymbol{\Theta}_k$ is already available before entering the attention mechanism.

The attention mechanism proposed in~\cite{velickovic_graph_2017} starts with the bi-affine mechanism of Equation~\eqref{eqn:biaffine-attention} without its bi-linear term, i.e. $\boldsymbol{\Lambda}_k{=}0$ for all head $k$. The output is then masked by a graph given a priori, the same for all heads, limiting the set of nodes attending on a given node to a neighbourhood of that node. When such prior graph is available, this makes sense, esp. to deal with large structures such as publication networks (up to 50,000 nodes in their experiments, hence, without mask, the output of the mechanism would be of size of magnitude $10^9$).

The masked output is then normalised by a term-wise ``leaky ReLU'' followed by a column-wise softmax. These choices can be motivated to some extent by properties of the simplified bi-affine mechanism at work. Indeed, observe that the masked values are still masked after a leaky ReLU\footnote{``Leaky'' is important here: recall that, in log domain, the mask value is $-\infty$ which is unchanged by a leaky ReLU but annulled by a plain ReLU.}, and are annulled by the softmax, cancelling as intended the influence of the corresponding inputs. Skipping ReLU altogether before the softmax would make the term involving $\boldsymbol{\nu}$ in the right-hand side of Equation~\eqref{eqn:biaffine-attention} redundant: it is constant along each column, and softmax is invariant to an additive constant.
\subsection{Attention in Transformer}
\label{sec:transformer}
We now show how the attention model described by Equations~\eqref{eqn:attention} and~\eqref{eqn:biaffine-attention} encompasses the scaled dot product attention used in the Transformer model of~\cite{vaswani_attention_2017}. In Transformer attention convolutions, the auxiliary inputs are called ``key'' and ``query'', respectively, while the main input is called ``value''. Attention is used in three distinct layers of the Transformer architecture. Two of them are instances of self-attention (on the source sequence and on the target sequence, respectively) while the third one is a cross-attention (the main input is the source sequence and the remaining auxiliary input is the target sequence). Masking is used in the target sequence self-attention, to ensure that tokens in that sequence do not have influence on their predecessors. This is because, in Transformer, the ultimate output of all the attention layers is used to model the next token from each position in the target sequence, so should not rely on the availability of that token.

In all cases, the scaled dot product attention used in Transformer essentially relies on the bi-affine attention mechanism of Equation~\eqref{eqn:biaffine-attention}, followed by a column-wise softmax. Actually, only the bi-linear part of Equation~\eqref{eqn:biaffine-attention} is kept, i.e. all the parameters except $\boldsymbol{\Lambda}_k$ are null. Furthermore, parameter $\boldsymbol{\Lambda}_k$ is constrained to be of the form
\begin{align}
\label{eqn:transformer-lambda}
\boldsymbol{\Lambda}_k & \eqnrel{=} \boldsymbol{\Lambda}_k^{(\textrm{key})}\boldsymbol{\Lambda}_k^{(\textrm{query})\top}
\hspace{.5cm}\left(=\boldsymbol{\Lambda}_k^{(\textrm{key})\top}\circ\boldsymbol{\Lambda}_k^{(\textrm{query})\top}\right)
\end{align}
where matrices $\boldsymbol{\Lambda}_k^{(\textrm{key})},\boldsymbol{\Lambda}_k^{(\textrm{query})}$ are of shape $\tuple{P',D},\tuple{Q',D}$, respectively. This can be viewed as a simple dimension reduction technique, since only $(P'{+}Q')D$ parameters are required instead of $P'Q'$ for an arbitrary $\boldsymbol{\Lambda}_k$.

Now, Transformer attention introduces a seemingly richer mechanism to combine the different heads. Instead of simply summing them together as in Equation~\eqref{eqn:attention}, it combines them with yet another linear layer:
\[
\boldsymbol{y} = [\boldsymbol{h}_1,\ldots,\boldsymbol{h}_K]\boldsymbol{\Theta}^{(\textrm{O})\top}
\hspace{1cm}\textrm{where}\hspace{1cm}
\boldsymbol{h}_k\triangleq\boldsymbol{A}_k^\top\boldsymbol{x}\boldsymbol{\Theta}_k^{(\textrm{value})}
\]
where $\boldsymbol{\Theta}^{(\textrm{O})}$ is a matrix of shape $\tuple{Q,KD}$ and each $\boldsymbol{\Theta}_k^{(\textrm{value})}$ is a matrix of shape $\tuple{P,D}$. In fact, this expression can be rewritten, splitting $\boldsymbol{\Theta}^{(\textrm{O})}$ into $K$ blocks $(\boldsymbol{\Theta}^{(\textrm{O})}_k)_{k=1:K}$ each of shape $\tuple{Q,D}$, as
\[
\boldsymbol{y} = {\sum}_k\boldsymbol{h}_k\boldsymbol{\Theta}^{(\textrm{O})\top}_k = {\sum}_k\boldsymbol{A}_k^\top\boldsymbol{x}\boldsymbol{\Theta}_k^{(\textrm{value})}\boldsymbol{\Theta}^{(\textrm{O})\top}_k
\]
In other words, it is strictly equivalent to the sum model of Equation~\eqref{eqn:attention}, only with the constraint
\begin{align}
\label{eqn:transformer-theta}
\boldsymbol{\Theta}_k & \eqnrel{=}\boldsymbol{\Theta}_k^{(\textrm{value})}\boldsymbol{\Theta}^{(\textrm{O})\top}_k
\hspace{.5cm}\left(=\boldsymbol{\Theta}_k^{(\textrm{value})\top}\circ\boldsymbol{\Theta}^{(\textrm{O})\top}_k\right)
\end{align}
This constraint is not even specific to an attention model and could apply to any convolution. In fact, Equations~\eqref{eqn:transformer-theta} and~\eqref{eqn:transformer-lambda} are meant to reduce the dimensionality of the parameters ($\boldsymbol{\Theta}$, and, in the case of attention,  $\boldsymbol{\Lambda}$) by factorisation. Formally, they apply the exact same recipe as applied to $\boldsymbol{\Phi}$ in Equation~\eqref{eqn:factorisation} or $\boldsymbol{\Theta}$ in Equation~\eqref{eqn:separable}, with the same purpose.

Finally, Transformers take the extreme approach of relying exclusively on content-based convolution (``attention is all you need''), so that any index-based information such as the relative position of the tokens must be incorporated into the content. They propose a smart but not completely intuitive scheme to achieve that, called ``positional encoding''. Instead, one or several additional heads with purely index-based basis matrices could be used, in complement to the attention heads. Typically, the index-based basis matrices would be those of a 1-D grid convolution, which have a particularly simple form (see Figure~\ref{fig:grid-shift}). And actually, it has been observed that in Transformers with positional encoding, after training, some of the attention heads precisely play the role of shift matrices. In the end, the problem is to model the role of token ordering in a sentence. It looks more natural to model it by directly adding shift matrices in the basis, than through a complicated encoding, validated mainly by the fact that some of the resulting attention heads end up playing the role of shift matrices in the basis...
\newpage
\printbibliography
\end{document}